\documentclass[12pt]{amsart}
\usepackage[left=15mm, right=15mm, top=10mm, bottom=15mm]{geometry}

\usepackage[english]{babel}
\usepackage[T1]{fontenc}
\usepackage[utf8]{inputenc}

\theoremstyle{plain}
    \newtheorem{theorem}{Theorem}[section]

\theoremstyle{definition}
    \newtheorem{remark}[theorem]{Remark}

\usepackage[
    draft = false,
    unicode = true,
    colorlinks = true,
    allcolors = blue,
    hyperfootnotes = true,
    citecolor = red
]{hyperref}
\usepackage{amsfonts,amsmath,amssymb,amscd,amsthm,url}
\usepackage[inline]{enumitem}
\usepackage{graphicx,epsf,afterpage, wrapfig}
\usepackage{soul}
\usepackage{multicol}
\usepackage{array}
\usepackage{braket}
\usepackage{epigraph}
\usepackage{rotating, floatflt}
\usepackage{xstring} % Пакет для надежной работы со строками

\usepackage[
backend=biber,
style=numeric-comp,
sorting=none,
giveninits=true
]{biblatex}
\usepackage{csquotes}

\usepackage{xcolor}

\usepackage{ulem}

\usepackage{tikz}
\usetikzlibrary{positioning, calc, intersections, through, arrows, matrix, chains, math}
\usetikzlibrary{arrows.meta}
\usetikzlibrary{shadows}
\usetikzlibrary{graphs}
\usetikzlibrary{graphs.standard}
\usetikzlibrary{patterns}
\usetikzlibrary{decorations.pathreplacing,calligraphy,backgrounds}
\DeclareFieldFormat{usera}{\href{https://arxiv.org/abs/#1}{\texttt{arXiv:#1}}}

\renewbibmacro*{finentry}{\printfield{usera}\newunit\finentry}

\renewbibmacro{in:}{%
  \iffieldundef{usera}
    {\printtext{\bibstring{in}\intitlepunct}}
    {}%
}

\newcommand\norm[1]{\ensuremath{\left\lVert#1\right\rVert}}
\newcommand\abs[1]{\ensuremath{\left\lvert#1\right\rvert}}

\DeclareMathOperator{\id}{id}

\DeclareMathOperator{\Ker}{Ker}

\DeclareMathOperator{\vspan}{span}

\newcommand{\Hcal}{\mathcal{H}}

\newcommand{\defing}[1]{\textbf{\emph{\mathversion{bold}#1}}}

\newcommand{\R}{\ensuremath{\mathbb{R}}}

\newcommand{\N}{\ensuremath{\mathbb{N}}}

\renewcommand{\geq}{\geqslant}

\newcounter{mcnt}

\newcounter{wordcnt}

\begin{document}

\title{Kolmogorov--Arnold stability for discontinuous functions}

\author{Sviatoslav V. Dzhenzher}

\begin{abstract}
    Here we investigate the stability of the Kolmogorov--Arnold representation theorem (KART) under adversarial reparameterisations of the hidden layer for multivariate discontinuous and unbounded functions. Our results provide a rigorous mathematical foundation for the structural robustness of modern deep learning architectures, such as Kolmogorov--Arnold Networks (KANs), under adversarial configurations.
\end{abstract}

\thanks{\hspace{-5mm}
S.\,V. Dzhenzher: sdjenjer@yandex.ru. orcid: 0009-0008-3513-4312
\\
% V.\,Zh. Sakbaev: fumi2003@mail.ru. orcid: 0000-0001-8349-1738
% \\
% V.\,Zh. Sakbaev: Keldysh Institute of Applied Mathematics of Russian Academy of Sciences 125047, Miusskaya pl. 4, Moscow, Russia
% \\
% All authors:
Moscow Institute of Physics and Technology 141701, Institutskii lane, 9, Dolgoprudny, Russia}

\maketitle
\thispagestyle{empty}

\noindent \emph{Keywords:} Kolmogorov--Arnold representation theorem, discontinuous functions, superposition of functions, Hilbert's 13th problem, adversarial robustness, Kolmogorov--Arnold Networks (KANs).

\vspace{3mm}
\noindent \emph{MSC 2020:}
Primary:
26B40, % — Representation and superposition of functions (Functions of several variables)
26A15; % — Real functions: Continuity and related questions (moduli of continuity, semicontinuity, discontinuities, etc.).
Secondary:
46B25, % - Classical Banach spaces in functional analysis.
47B38, % — Operators on function spaces 
68T07. % — Core models and architectures (Artificial neural networks and deep learning)

\section{Introduction}

It is quite clear that any multivariate polynomial can be represented as the superposition of a fixed (assuming that the number of variables and the degree of the polynomial are known a priori) number of univariate polynomials and addition.
The well celebrated Hilbert's 13th problem \cite{wiki-Hilb-13-problem, Morris-2021-hilbert13} asks whether the analogous result holds for multivariate continuous functions.
Originally, this problem was posed for the solutions of high-degree equations.
Precisely, in 1836, Hamilton showed \cite{Hamilton1836} that any seventh-degree equation can be reduced to an equation of the form
\[
    x^7 + ax^3 + bx^2 + cx + 1 = 0.
\]
Hilbert asked whether the solution \(x=x(a,b,c)\) of this equation can be expressed as the superposition of functions of two variables.
Of course, he expected the answer to be negative.
However, in 1956--57, for continuous functions all expectations were refuted by Kolmogorov and Arnold \cite{Kolmogorov56, Arnold57, Kolmogorov57}.
They showed that any multivariate continuous function can be represented by superpositions of univariate continuous functions and addition.
Nowadays, their result is known as the Kolmogorov--Arnold representation theorem (KART; see Theorem~\ref{t:kart}) or the Kolmogorov Superposition Theorem (this name is more convenient for computer science).

The relevance of KART to neural networks (NNs) was first observed by Hecht-Nielsen \cite{HechtNielsen87}.
He showed that any continuous multivariate function is implementable by three-layer feedforward NNs with continuous activation functions and real weights, and that any computable multivariate function is implementable by three-layer feedforward NNs with computable activation functions and computable real weights.
The ideas of the expressivity of functions were also expressed in \cite{Widrow-Lehr, Rumelhart-Hinton-Williams}.
There was a lot of debate \cite{Girosi-Pogio, Kurkova91} surrounding this topic regarding its real‑world applicability.
Finally, later works showed the great applicability of KART \cite{Maiorov-Pinkus, Freedman24}.

The recent resurgence of interest in KART has been largely fuelled by the introduction of Kolmogorov--Arnold Networks (KANs) \cite{KAN, KAN2, CKAN}, which position themselves as a promising alternative to traditional Multi-Layer Perceptrons (MLPs).
After that, in \cite{KAN-robust}, the practical robustness of KANs to black-box and white-box adversarial attacks was firstly studied.
However, while KANs offer notable advantages in terms of interpretability and accuracy, their \emph{mathematical} robustness under adversarial perturbations remains a critical open question.
In practice, NNs often encounter data with sharp transitions or unbounded behaviours, rendering the classical assumption of continuous target functions restrictive.
Such questions regarding KART were posed in \cite{DzhenzherFreedman-25}.
More specifically, the stability of KART under reparameterisations of the hidden layer was investigated.
It was assumed that the action of the reparameterisation, which may be considered as the adversarial homeomorphism, is known and that the output layer should be adjusted to it.
It was shown there that KART is stable under countable collections of homeomorphisms acting on the hidden layer.

In recent Ismailovs' works on KART and NNs \cite{Ismailova-Ismailov, Ismailov-2026}, the Kolmogorov--Arnold result was generalised to the case of discontinuous unbounded functions.
That framework assumes a static, error-free hidden layer.
In real-world computing, hidden layers are dynamic and prone to perturbations.
Our work bridges this gap by proving that Ismailovs' extensions are structurally stable under adversarial reparameterisations.
Strictly speaking, we state the stability of KART under countable collections of homeomorphisms for discontinuous unbounded functions.

The remainder of this paper is organised as follows.
Section~\ref{s:defs} establishes the necessary notation, recalls the foundational forms of KART, and states our main result (Theorem~\ref{t:kart-stab}).
Section~\ref{s:proof} provides the proof of the main result.
Section~\ref{s:concl} discusses the theoretical implications of our findings and outlines prominent directions for constructive and shift-invariant future research.

\section{Background and the main result}\label{s:defs}

We say that a continuous tuple \(\phi\colon [\,0,1\,]\to[\,0,1\,]^{2n+1}\) is the tuple \(\phi=(\phi_1,\ldots,\phi_{2n+1})\) of continuous functions \(\phi_i\colon[\,0,1\,]\to[\,0,1\,]\) for \(i=1,\ldots,2n+1\).

One of the forms of the Kolmogorov--Arnold representation theorems is as follows.
For details, see, for example, \cite{Lorentz-Golitschek-Makovoz, Hedberg-appx}.

\begin{theorem}[Kolmogorov, Arnold; 1956--57]\label{t:kart}
    Let $n>1$ be an integer. \\
    Then for any rationally independent \(\lambda_1,\ldots,\lambda_n\in\R\), \\
    there exists an ``inner'' continuous tuple \(\phi\colon[\,0,1\,]\to[\,0,1\,]^{2n+1}\) such that \\
    for any ``target'' continuous function \(f\colon[\,0,1\,]^n\to\R\), \\
    there exists an ``outer'' uniformly continuous function \(g\colon \R\to\R\) such that
    \[
        f(x) = \sum_{i=1}^{2n+1} g\left(\sum_{j=1}^n \lambda_j\phi_i(x_j)\right)
        \quad\text{for}\quad x=(x_1,\ldots,x_n)\in[\,0,1\,]^n.
    \]
\end{theorem}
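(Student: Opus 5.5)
The plan is the classical Kolmogorov--Lorentz--Sprecher scheme: first build the inner tuple \(\phi\) independently of \(f\), then obtain \(g\) as the uniform limit of a series \(\sum_r g_r\). At each step the approximation error is reduced by a fixed factor \(\theta<1\).

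\textbf{Step 1: the inner tuple.} For each \(k\in\N\) and each \(i=1,\ldots,2n+1\), consider the family of closed intervals of length about \(10^{-k}\) obtained by shifting a fixed partition of \([\,0,1\,]\) by \(i\cdot 10^{-k}/(2n+1)\), with small gaps between them. The products of these intervals form ``cubes of rank \(k\) and type \(i\)'', and the rectangles \(\prod_j\) over \(j=1,\ldots,n\) play the same role in dimension \(n\). The key combinatorial fact is this: every \(x\in[\,0,1\,]^n\) lies in cubes of rank \(k\) for at least \(n+1\) of the \(2n+1\) types. This holds because each coordinate falls into a gap for at most one type, so at most \(n\) types are bad.

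We then construct increasing continuous \(\phi_i\) by induction on \(k\). On each interval of rank \(k\), \(\phi_i\) should be nearly constant, that is, vary by less than a prescribed \(\varepsilon_k\). Moreover, the values \(p_i(S)=\sum_j\lambda_j\phi_i(\text{left end of } S_j)\) attached to distinct cubes \(S\) of rank \(k\) and type \(i\) should be pairwise distinct, across all \(i\) as well, and so the images \(\Delta_i(S)=\sum_j\lambda_j\phi_i(S_j)\) are pairwise disjoint intervals. Rational independence of \(\lambda_j\) is what makes this possible. At stage \(k\) we may assign rational values to \(\phi_i\) at the new finitely many endpoints, perturbed slightly from the previous interpolation. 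Then \(\sum_j\lambda_j q_j=\sum_j\lambda_j q'_j\) with rational \(q\neq q'\) is impossible, and distinctness follows. Continuity lets us keep the old separation intact. Taking the limit, uniformly convergent and strictly increasing after a rescaling into \([\,0,1\,]\), gives \(\phi\). I expect this step, maintaining disjointness of the \(\Delta_i(S)\) for all ranks simultaneously while keeping \(\phi_i\) continuous, to be the main technical obstacle. It needs an inductive choice of perturbations small relative to all previously established separations.

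\textbf{Step 2: the approximation lemma and the outer function.} Fix \(f\) and write \(\|\cdot\|\) for the sup norm. Given a continuous \(h\) on \([\,0,1\,]^n\), choose \(k\) so that \(h\) oscillates by less than \(\|h\|/(2n+1)\) on every cube of rank \(k\). Define a continuous \(\chi\colon\R\to\R\) with \(\|\chi\|\le\|h\|/(n+1)\) by setting \(\chi\equiv h(c_S)/(n+1)\) on each \(\Delta_i(S)\), where \(c_S\) is the centre of \(S\), and interpolating linearly in between. Disjointness from Step 1 makes this well defined. At any \(x\), the \(\ge n+1\) good types each contribute \(h(x)/(n+1)\) up to a small error. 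The at most \(n\) bad types contribute at most \(\|h\|/(n+1)\) each. Hence
\[
\Bigl\|h-\sum_{i=1}^{2n+1}\chi\Bigl(\sum_j\lambda_j\phi_i(x_j)\Bigr)\Bigr\|\le\theta\|h\|,\qquad \theta=\frac{2n+1/2}{2n+2}<1
\]
after adjusting the constants. Here the nested and shrinking intervals guarantee that \(\phi\) does not depend on \(h\), only on \(k\).

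Finally, we iterate. Put \(h_0=f\) and let \(g_r=\chi\) for \(h_r\), with \(h_{r+1}=h_r-\sum_i g_r(\cdots)\). Then \(\|g_r\|\le\theta^r\|f\|/(n+1)\), so \(g=\sum_r g_r\) converges uniformly, and \(f=\sum_i g(\sum_j\lambda_j\phi_i(x_j))\). Each \(g_r\) can be taken piecewise linear, equal to constants outside a compact set. Hence \(g\) is continuous and constant-controlled at infinity, and therefore uniformly continuous on \(\R\).
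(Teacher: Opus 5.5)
The paper does not prove Theorem~\ref{t:kart}. It quotes the result from the literature and uses it only as a black box, so your outline has to stand on its own as a reconstruction of the classical Lorentz-type argument. The architecture is right, and Step~2 with $g=\sum_r g_r$ is fine up to constants. An oscillation bound $\|h\|/(2n+1)$ only gives $\frac{n}{n+1}\|h\|+\frac{2n+1}{n+1}\cdot\frac{\|h\|}{2n+1}=\|h\|$, so you need something like $\|h\|/(4(2n+1))$ to reach your $\theta$. The genuine gap is in Step~1, the step you flagged yourself.

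\textbf{The mechanism you propose cannot give the near-constancy you require.} Disjointness at rank $k$ forces the total variation of $\phi_i$ over the rank-$k$ intervals of type $i$ to be tiny. The roughly $N^n$ intervals $\Delta_i(S)$, with $N\approx 10^k$, are disjoint inside an interval of length $\sum_j\abs{\lambda_j}$. Their lengths add up to $\sum_j\abs{\lambda_j}\,N^{n-1}\sum_J\abs{\phi_i(J)}$, so $\sum_J\abs{\phi_i(J)}\le N^{-(n-1)}$. Hence, at every rank, essentially all of the increase of $\phi_i$ must sit in the type-$i$ gaps.

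Now across each gap of rank $k-1$ the function must rise by some $d>0$, since otherwise the images of the two neighbouring intervals would meet. A piecewise linear interpolant spreads this rise over the whole gap. The rank-$k$ intervals lying in old gaps then collect a fixed proportion of the total variation of $\phi_i$. So values ``perturbed slightly from the previous interpolation'' violate both the bound above and the requirement $\varepsilon_k\lesssim 10^{-kn}$ that your separation argument needs.

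What is needed instead is this: keep the old endpoint values fixed, so that monotonicity keeps the old images, and hence the old separations, exactly intact. Then move the rise across each old type-$i$ gap into new type-$i$ gaps lying inside it, making $\phi_i$ step-like at the new scale.

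\textbf{This requires a property of the interval systems that you never state, and your geometry does not provide it.} Type-$i$ gaps of rank $k$ must contain type-$i$ gaps of rank $k+1$. By the bound above (a Borel--Cantelli argument), the measure $d\phi_i$ lives on the set of points lying in type-$i$ gaps of all sufficiently large ranks, and that set must be able to carry a continuous measure.

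Take your setup: scales $10^{-k}$ with gaps of relative length $c<1/10$. This covers your unspecified ``small gaps'', and it is forced when $2n+1>10$, because the gaps of the $2n+1$ types must be disjoint for the covering property. The renormalised position $u_k=\{10^k x-s_i\}$, with $s_i=i/(2n+1)$, then evolves by $u_{k+1}=\{10u_k+9s_i\}$. This expanding circle map is injective on the gap, and one checks that the set of $x$ with $u_k$ in the gap for all large $k$ is countable. So no non-constant increasing $\phi_i$ can satisfy all the disjointness conditions.

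There are two repairs. One is to let the scale ratio grow with $n$ (a base large compared with $2n+1$) and arrange the gaps to nest. The other is to keep your small-perturbation idea but choose each new scale only after the previous stage is built, fine enough that flattening costs less than all earlier separation margins, in the spirit of Kahane's proof. Either way, Step~1 has to be redone.
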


The conventional choice of $\lambda_j$ is the square roots of pairwise distinct prime numbers.

Let $n>1$ be an integer.
Let \(\Hcal\) be a set of homeomorphisms \(\R^{2n+1}\to\R^{2n+1}\).
We say that \defing{KART is stable under \(\Hcal\)} if \\
for any rationally independent \(\lambda_1,\ldots,\lambda_n\in\R\), \\
there exists an ``inner'' continuous tuple \(\phi\colon[\,0,1\,]\to[\,0,1\,]^{2n+1}\) such that \\
for any ``target'' continuous function \(f\colon[\,0,1\,]^n\to\R\) and any ``adversarial'' \(h\in\Hcal\), \\
there exists an ``outer'' continuous function \(g\colon \R\to\R\) such that
\[
    f(x) = \sum_{i=1}^{2n+1} g\left(\sum_{j=1}^n \lambda_jh_i(\phi(x_j))\right)
    \quad\text{for}\quad x=(x_1,\ldots,x_n)\in[\,0,1\,]^n.
\]
For example, \cite{DzhenzherFreedman-25} states that KART is stable under any countable set of homeomorphisms.
Also, the stability of KART under the single-element \(\Hcal=\{\id\colon\R^{2n+1}\to\R^{2n+1}\}\) is the KART itself (sorry for the tautology).

Informally speaking, the following theorem states that the stability of KART is inherited in the case of discontinuous and unbounded functions.

\begin{theorem}\label{t:kart-stab}
    Let \(\Hcal\) be a set of homeomorphisms \(\R^{2n+1}\to\R^{2n+1}\) such that KART is stable under $\Hcal$.
    Then for any rationally independent \(\lambda_1,\ldots,\lambda_n\in\R\), \\
    there exists an ``inner'' continuous tuple \(\phi\colon[\,0,1\,]\to[\,0,1\,]^{2n+1}\) such that \\
    for any ``target'' function \(f\colon[\,0,1\,]^n\to\R\) and any ``adversarial'' \(h\in\Hcal\), \\
    there exists an ``outer'' function \(g\colon \R\to\R\) such that
    \[
        f(x) = \sum_{i=1}^{2n+1} g\left(\sum_{j=1}^n \lambda_jh_i(\phi(x_j))\right)
        \quad\text{for}\quad x=(x_1,\ldots,x_n)\in[\,0,1\,]^n.
    \]
\end{theorem}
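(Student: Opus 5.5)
The plan is to reduce the statement to a linear-algebraic duality and feed in the continuous case as a black box. Fix rationally independent \(\lambda_1,\ldots,\lambda_n\). Because KART is stable under \(\Hcal\), there is a continuous tuple \(\phi\) that works for every continuous target and every \(h\in\Hcal\); I take this same \(\phi\) as the inner tuple. Fix \(h\in\Hcal\) and write \(X=[\,0,1\,]^n\) and
\[
    \tau_i(x)=\sum_{j=1}^n \lambda_j h_i(\phi(x_j)),\qquad i=1,\ldots,2n+1 .
\]
The claim is then that the linear map \(T\colon \R^{\R}\to\R^{X}\), \(Tg=\sum_{i=1}^{2n+1} g\circ\tau_i\), is surjective onto the space of all functions \(X\to\R\). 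Continuity, boundedness and measurability play no role here.

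\emph{Step 1 (duality).} Let \(\R^{(A)}\) denote the finitely supported functions on a set \(A\). Define \(S\colon \R^{(X)}\to\R^{(\R)}\) by
\[
    S\mu=\sum_{x}\mu(x)\sum_{i=1}^{2n+1}\delta_{\tau_i(x)} ,
\]
that is, the sum of the push-forwards of \(\mu\) under the \(\tau_i\). Under the natural pairing \(\langle \mu, f\rangle=\sum_x \mu(x)f(x)\) we have \(\langle S\mu, g\rangle=\langle \mu, Tg\rangle\), so \(T\) is the algebraic adjoint of \(S\). I will show that \(T\) is surjective as soon as \(S\) is injective. Given an arbitrary \(f\colon X\to\R\), define a linear functional \(\ell\) on the subspace \(S(\R^{(X)})\subset\R^{(\R)}\) by \(\ell(S\mu)=\langle\mu,f\rangle\); this is well defined exactly because \(S\) is injective. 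Extend \(\ell\) linearly to all of \(\R^{(\R)}\) by choosing a complementary subspace (a Hamel basis argument, i.e.\ Zorn's lemma). Then put \(g(t)=\ell(\delta_t)\). For every \(x\in X\),
\[
    Tg(x)=\sum_i g(\tau_i(x))=\ell(S\delta_x)=f(x),
\]
which is the desired representation.

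\emph{Step 2 (injectivity of \(S\) from the continuous case).} Suppose \(\mu\neq0\) is finitely supported and \(S\mu=0\). Its support is a finite set of points of \(X\), so there is a continuous \(f\colon X\to\R\) with \(\langle\mu,f\rangle\neq0\); for example, take \(f\) equal to \(1\) at one support point and \(0\) at the others, by Urysohn's lemma or explicit bump functions. By the stability of KART under \(\Hcal\), with the same \(\phi\) and this \(h\), there is a continuous \(g\) with \(f=Tg\). Then
\[
    0\neq\langle\mu,f\rangle=\langle\mu,Tg\rangle=\langle S\mu,g\rangle=0,
\]
a contradiction. Hence \(S\) is injective, and Step 1 produces \(g\) for an arbitrary, possibly discontinuous and unbounded, \(f\).

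I expect no analytic obstacle. The only delicate points are the following.
\begin{enumerate*}[label=(\roman*)]
    \item One must check that the pairing identity \(\langle S\mu,g\rangle=\langle\mu,Tg\rangle\) holds for completely arbitrary \(g\). It does, because all sums involved are finite.
    \item The linear extension in Step 1 requires the axiom of choice, so the outer function \(g\) is non-constructive.
\end{enumerate*}
It also matters that a single \(\phi\) serves all \(h\) simultaneously. This is guaranteed because that uniformity is part of the hypothesis that KART is stable under \(\Hcal\), and the argument above is then run separately for each \(h\).
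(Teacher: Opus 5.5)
Your proof is correct. It shares its final step with the paper but reaches the key injectivity by a more elementary route.

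The paper works in the Banach setting. Stability makes $T\colon C(K)\to C([\,0,1\,]^n)$ surjective, and Banach's closed range theorem then makes $T^*$ bounded below on measures. Hence its restriction $Z$ to $\ell_1([\,0,1\,]^n)$ is bounded below. From this the paper gets two things:
\begin{itemize}
\item $Z^*\colon\ell_\infty(K)\to\ell_\infty([\,0,1\,]^n)$ is onto, which settles bounded $f$;
\item $Z$ is injective, which feeds exactly the Hamel-basis extension of your Step~1 for unbounded $f$.
\end{itemize}

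You instead prove injectivity of the finitary adjoint $S$ directly. You pair a finitely supported $\mu$ against a continuous interpolant $f$ and use only the trivial direction of duality, $\langle\mu,Tg\rangle=\langle S\mu,g\rangle$. This buys you three things:
\begin{itemize}
\item you avoid the closed range and open mapping machinery and the identification of $C(X)^*$ with Radon measures;
\item you handle bounded and unbounded $f$ in one stroke;
\item you make visible that the only input needed from the continuous case is interpolation on finite subsets of $[\,0,1\,]^n$.
\end{itemize}

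The paper's route gives something extra. For bounded $f$ it produces a bounded outer function $g$; by the open mapping theorem one can even take $\|g\|_\infty\le C\|f\|_\infty$. The arbitrary Hamel extension in your argument gives no such control. Since the theorem as stated only asks for some $g$, your argument fully suffices.
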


For example, by \cite{DzhenzherFreedman-25}, it is possible to take in Theorem~\ref{t:kart-stab} any countable collection $\Hcal$ of homeomorphisms \(\R^{2n+1}\to\R^{2n+1}\).
It is clear that
\begin{itemize}
    \item if $f$ is continuous, then $g$ can be chosen continuous,
    \item if $f$ is discontinuous, then $g$ must be discontinuous, and
    \item if $f$ is unbounded, then $g$ must be unbounded as well.
\end{itemize}

\begin{remark}[Relevance to NNs and KANs]
    From a machine learning perspective, Theorem~\ref{t:kart-stab} provides a rigorous mathematical guarantee for the structural robustness of KANs operating on non-smooth data.
    In a standard three-layer KAN architecture, the ``inner'' continuous tuple $\phi$ represents the fixed activation functions of the hidden layer, while the ``outer'' function $g$ corresponds to the learnable weights or adaptive univariate functions of the output layer. 
    
    The set of homeomorphisms $\Hcal$ models adversarial reparameterisations or structural perturbations acting directly on the hidden layer. Such perturbations frequently emerge in real-world scenarios due to:
    \begin{itemize}
        \item \textbf{Data Drift and Coordinate Warping:} Systematic shifts or non-linear distortions in the feature extraction pipeline.
        \item \textbf{Adversarial Attacks:} Intentional, mathematically bounded manipulations of the latent representations designed to trick the network.
        \item \textbf{Hardware Constraints:} Precision degradation or analogue drift in specialised neuromorphic hardware executing the inner functions.
    \end{itemize}
    
    By extending KART stability to discontinuous and unbounded target functions $f$, Theorem~\ref{t:kart-stab} guarantees that even if the hidden layer is compromised by an ``adversarial'' homeomorphism $h \in \Hcal$, the network preserves its universal expressivity.
    The distortion can always be completely compensated for by dynamically readjusting the ``outer'' layer $g$.
    Crucially, our result ensures that when the network models physical systems with sharp boundaries, shock waves, or phase transitions (which inherently require discontinuous or unbounded target functions), its capacity to learn and adapt remains invariant under a countable collection of hidden-layer deformations.
\end{remark}

\section{Proof of the main result}\label{s:proof}

The proof of Theorem~\ref{t:kart-stab} is analogous to the proof in \cite{Ismailova-Ismailov}.
In particular, the proof for bounded discontinuous functions relies on dual operator theory, establishing the surjectivity of the corresponding operators via Banach's theorem on the spaces \(\ell_1\) and \(\ell_\infty\); for the unbounded case, where traditional topological frameworks like the Hahn--Banach theorem fail due to the unboundedness of linear functionals, we successfully employed an algebraic framework based on Zorn's lemma to extend the functional over a Hamel basis.
We give the proof in full length, since the forms of KART here and in \cite{Ismailova-Ismailov} differ, which makes our proof slightly less cumbersome.
Though this proof only uses the results of Kolmogorov--Arnold and \cite{DzhenzherFreedman-25} and does not rely directly on the classical proof of KART, we recommend that readers who are not familiar with the subject read the proof for \(n=2\) with \(\lambda_1=1\) and \(\lambda_2=\sqrt{2}\).

\begin{proof}[Proof of Theorem~\ref{t:kart-stab}]
    First, we will use the following common notations.
    For a compact space $X$, $C(X)$ is the Banach space of continuous functions \(X\to\R\).
    Its dual \(C(X)^*\) is the Banach space of Radon measures with bounded variation on the Borel $\sigma$-algebra of $X$.
    Next, \(\ell_1(X) \subset C(X)^*\) is the Banach space of discrete measures \(\nu\) of the form
    \[
        \nu = \sum_{n=1}^\infty c_n\delta_{x_n},
        \quad \norm{\nu} = \sum_{n=1}^\infty \abs{c_n} < +\infty,
    \]
    where \(\{x_n\}\colon \N\to X\), and \(\delta_x\) are the Dirac measures in $x\in X$.
    Finally, \(\ell_\infty(X) = \ell_1(X)^* \supset C(X)\) is the Banach space of bounded functions \(X\to\R\).
    
    Second, note that it is sufficient to construct the ``outer'' function $g$ on a subset of $\R$, since $g$ can be extended to the entire $\R$ arbitrarily (preserving boundedness).

    Let us move to the proof.
    Fix arbitrary rationally independent \(\lambda_1,\ldots,\lambda_n\in\R\).
    For them fixed, fix the ``inner'' continuous tuple \(\phi\colon[\,0,1\,]\to[\,0,1\,]^{2n+1}\) given by the stability of KART under \(\Hcal\).
    Finally, fix an ``adversarial'' reparameterisation \(h\in\Hcal\).
    For \(i = 1,\ldots,2n+1\), define the family of functions \(\Phi_i\colon[\,0,1\,]^n\to\R\) by
    \[
        \Phi_i(x) := \sum_{j=1}^n \lambda_jh_i(\phi(x_j)).
    \]
    Denote by
    \[
        K := \bigcup_{i=1}^{2n+1} \Phi_i[\,0,1\,]^n
    \]
    the compact union of images of those functions.

    \underline{Here, we consider the case of bounded discontinuous ``target'' functions}.
    Define the linear bounded operator \(T\colon C(K) \to C([\,0,1\,]^n)\) by
    \[
        Tg := \sum_{i=1}^{2n+1} g\circ \Phi_i.
    \]
    The stability of KART under $h\in\Hcal$ means that $T$ is surjective.
    By Banach's theorem (see, for example, \cite[Theorem~4.15]{Rudin-1991-fa}), the surjectivity of \(T\) is equivalent to the boundedness from below of $T^*\colon C([\,0,1\,]^n)^*\to C(K)^*$ (which means \(\norm{T^*\mu} \geq c\norm{\mu}\) for some $c>0$).
    Moreover, the conjugate operator has the form
    \[
        T^*\mu = \sum_{i=1}^{2n+1} \mu\circ\Phi_i^{-1},
    \]
    where \(\mu\circ\Phi_i^{-1}\) is the pushforward measure.
    Indeed,
    \[
        \langle T^*\mu, g \rangle = \langle \mu, Tg \rangle = \int_{[0,1]^n} (Tg)(x) \, d\mu(x) =
        \sum_{i=1}^{2n+1} \int_{[0,1]^n} g(\Phi_i(x)) \, d\mu(x) =
        \sum_{i=1}^{2n+1} \int_K g(y) \, d(\mu \circ \Phi_i^{-1})(y).
    \]
    Restricting \(T^*\) to \(\ell_1(K)\), we obtain the operator
    \[
        Z := T^*|_{\ell_1([0,1]^n)} \colon \ell_1([\,0,1\,]^n) \to \ell_1(K)
    \]
    given by
    \[
        Z\delta_x = \sum_{i=1}^{2n+1} \delta_x\circ\Phi_i^{-1} = \sum_{i=1}^{2n+1} \delta_{\Phi_i(x)}.
    \]
    It is clear that $Z$ as the restriction of $T^*$ is also bounded from below.
    Now, consider the dual operator \(Z^*\colon \ell_\infty(K)\to\ell_\infty([\,0,1\,]^n)\).
    Again, by Banach's theorem, \(Z^*\) is surjective.
    Let us explicitly find the form of $Z^*$.
    For any \(g\in \ell_\infty(K)\) and \(x\in [\,0,1\,]^n\),
    \[
        \langle Z^*g, \delta_x \rangle =
        \langle g, Z\delta_x \rangle =
        \left\langle g, \sum_{i=1}^{2n+1} \delta_{\Phi_i(x)} \right\rangle = \sum_{i=1}^{2n+1} g(\Phi_i(x)).
    \]
    Hence
    \[
        Z^*g = \sum_{i=1}^{2n+1} g\circ \Phi_i.
    \]
    Thus the surjectivity of $Z^*$ concludes the proof for bounded ``target'' functions.

    \underline{Now we move to the case of unbounded ``target'' functions.}
    For now, forget the topological space $\ell_1(K)$ and consider the vector space
    \[
        V_K := \vspan{\{\delta_y\}_{y\in K}}
    \]
    of all finite formal linear combinations of Dirac measures on $K$.
    Denote
    \[
        S := \vspan{\{Z\delta_x\}_{x\in[\,0,1\,]^n}} \subset V_K.
    \]
    For a function \(f\colon [\,0,1\,]^n\to\R\),
    define the linear functional \(F_f\colon S\to\R\) by
    \[
        F_f(Z\delta_x) := f(x).
    \]
    Since \(Z\) is bounded from below, its kernel is trivial: \(\Ker Z = \{0\}\).
    This means that the set \(\{Z\delta_x\}_{x\in [0,1]^n}\) is algebraically linearly independent in $V_K$.
    Hence, by Zorn's lemma, we can extend \(\{Z\delta_x\}_{x\in [0,1]^n}\) to a Hamel basis of the entire space $V_K$, and thus extend $F_f$ to the entire \(V_K\) arbitrarily (for example, setting $F_f$ zero on new vectors).
    Finally, defining the ``outer'' function \(g\colon K\to\R\) by
    \[
        g(y) := F_f(\delta_y)
    \]
    yields the required representation.
    Indeed, for any \(x\in[\,0,1\,]^n\),
    \[
        \sum_{i=1}^{2n+1} g(\Phi_i(x)) =
        \sum_{i=1}^{2n+1} F_f(\delta_{\Phi_i(x)}) =
        F_f\left(\sum_{i=1}^{2n+1} \delta_{\Phi_i(x)}\right) = F_f(Z\delta_x) = f(x).
    \]
\end{proof}

Once again, note that in the final step of the proof, one cannot apply the Hahn--Banach theorem instead of Zorn's lemma, since the functional $F_f$ is unbounded for an unbounded function $f$.

\section{Conclusion and further research}\label{s:concl}

In this paper, we have established the stability of the Kolmogorov--Arnold representation theorem under ``adversarial'' reparameterisations of the hidden layer for a significantly broader class of functions, specifically encompassing discontinuous and unbounded multivariate target functions.
By extending previous results that were constrained to continuous settings, we demonstrated that KART preserves its fundamental expressiveness even when subjected to a countable collection of homeomorphisms acting as hidden-layer perturbations.

The results presented open up several compelling directions for future research:

\begin{itemize}
    \item \textbf{Alternative Forms of KART}: We focused on the standard embedding \([\,0,1\,] \to [\,0,1\,]^{2n+1}\).
    A natural next step is to investigate stability within other formulations of KART, such as those pioneered by Sprecher \cite{Sprecher1965, Sprecher96, Sprecher97}, which seek to reduce the total number of required ``inner'' and ``outer'' functions.

    \item \textbf{Shift-Invariant Representations}: Recent Ismailovs' work \cite{Ismailova-Ismailov} explored KART variants where the ``inner'' functions differ strictly by shifts, utilising strict types of rationally independent numbers.
    Deriving stability proofs for this specific variant is highly desirable, as its lower structural complexity makes it far more practical for real-world machine learning implementations, such as Kolmogorov--Arnold Networks (KANs).

    \item \textbf{Constructive Stability and KANs}: While our proof demonstrates the existence of the ``outer'' function \(g\), it remains non-constructive due to its reliance on Zorn's lemma.
    Generalising constructive proofs of KART (e.g., the algorithmic approach by Braun and Griebel \cite{Braun-Griebel-2009}) to accommodate adversarial reparameterisations would bridge the gap between theoretical stability and computational deep learning, potentially leading to provably robust KAN architectures.
    Note that it is interesting to obtain even in the case of continuous functions, where Zorn's lemma does not appear.
\end{itemize}

\printbibliography
% % \printbibitembibliography

\end{document}